\newcolumntype{Y}{>{\centering\arraybackslash}X}
\newtheorem{observation}{Observation}
\def\Z{\mathbb{Z}}
\def\R{\mathbb{R}}
\newcommand{\Mod}[1]{\ \mathrm{mod}\ #1}
\def\our{ChiENN}
\begin{document}
%


\title{\our{}: Embracing Molecular Chirality with Graph Neural Networks}

\titlerunning{\our{}: Embracing Molecular Chirality with Graph Neural Networks}

\author{Piotr Gaiński\inst{1}(\Letter) \and
Michał Koziarski\inst{2, 3} \and
Jacek Tabor\inst{1} \and
Marek \'Smieja\inst{1}}

\tocauthor{Piotr~Gaiński et al.}
\toctitle{\our{}: Embracing Molecular Chirality with Graph Neural Networks}
%
%
\institute{Faculty of Mathematics and Computer Science, Jagiellonian University, \\Krak\'ow, Poland \\
\and
Mila - Quebec AI Institute, Montreal, Quebec, Canada \\ 
\and
Université de Montréal, Montreal, Quebec, Canada \\
\email{piotr.gainski@doctoral.uj.edu.pl} 
}
\maketitle              
\begin{abstract}
Graph Neural Networks (GNNs) play a fundamental role in many deep learning problems, in particular in cheminformatics. However, typical GNNs cannot capture the concept of chirality, which means they do not distinguish between the 3D graph of a chemical compound and its mirror image (enantiomer). The ability to distinguish between enantiomers is important especially in drug discovery because enantiomers can have very distinct biochemical properties. In this paper, we propose a theoretically justified message-passing scheme, which makes GNNs sensitive to the order of node neighbors. We apply that general concept in the context of molecular chirality to construct Chiral Edge Neural Network (ChiENN) layer which can be appended to any GNN model to enable chirality-awareness. Our experiments show that adding ChiENN layers to a GNN outperforms current state-of-the-art methods in chiral-sensitive molecular property prediction tasks.

\keywords{Graph Neural Networks \and GNN \and Message-passing \and Chirality \and Molecular Property Prediction}
\end{abstract}

\section{Introduction}

Recent advances in Graph Neural Networks (GNNs) have revolutionized cheminformatics and enabled learning the molecular representation directly from chemical structures \cite{first_gnn,gat}. GNNs are widely used in molecular property prediction \cite{dmpnn,mat,unimol,molclr}, synthesis prediction \cite{gnn_syn1,gnn_syn2}, molecule generation \cite{maziark2,guacamol,maziarz1}, or conformer generation \cite{gen,gen2,gen3,geomol}. Surprisingly, typical GNNs cannot capture the concept of chirality, roughly meaning they do not distinguish between a molecule and its mirror image, called enantiomer (see \cref{fig:chiral-molecule}). Although enantiomers share many physical, chemical, and biological properties, they may behave remarkably differently when interacting with other chiral molecules, e.g. chiral proteins. For this reason, capturing chirality is critical in the context of drug design \cite{chiral1,chiral2,chiral3,chiral4,chiral5} and should not be ignored by the design of GNN architecture. 

\begin{figure}
  \includegraphics[width=0.7\linewidth]{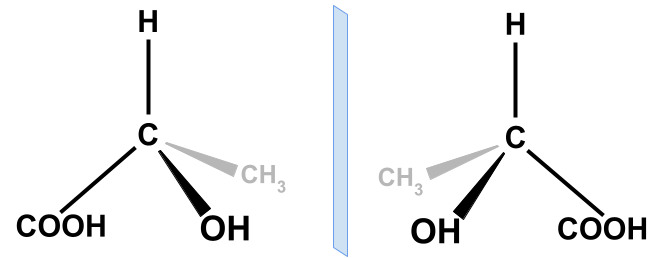}\centering
  \caption{An example of a chiral molecule (left) and its mirror image (right).}
  \label{fig:chiral-molecule}
\end{figure}

A chiral molecule is a molecule with at least one chiral center which is usually a carbon atom with four non-equivalent constituents. The mirror image of a chiral molecule, called an enantiomer, cannot be superposed back to the original molecule by any combination of rotations, translations, and conformational changes (see \cref{fig:chiral-molecule}). Therefore, enantiomers are molecules with different bond arrangements and the same graph connectivity. There are many examples of chiral drugs used in pharmacy whose enantiomers cause substantially different effects \cite{chiral1}. For instance (S)-penicillamine is an antiarthritic drug while its enantiomer (R)-penicillamine is extremely toxic \cite{wiki_chiral}. 

Actually, chirality can be a characteristic of any class of graphs embedded in euclidean space (where we have an intuitive notion of reflection). 
For instance, \cref{fig:road_map} shows two 2D road maps that are mirror images of each other and possess different properties. 
For this reason, modeling chirality in GNNs is not restricted to the chemical domain. 

\begin{wrapfigure}{r}{0.48\textwidth}
  \includegraphics[width=\linewidth]{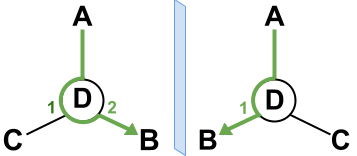}\centering
  \caption{An illustration of a road map (left) and its mirror image (right). We see that the maps share the same connectivity between cities, however, to get from city $A$ to city $B$ one has to take the second exit on a roundabout $D$ for the left map, and the first exit for the right map.}
  \label{fig:road_map}
\end{wrapfigure}

In this paper, we propose and theoretically justify a novel order-sensitive message-passing scheme, \linebreak
which makes GNNs sensitive to chirality. In contrast to existing methods of embracing chirality, our framework is not domain specific and does not rely on arbitrary chiral tagging or torsion angles (see \cref{sec:related_work}). The only inductive bias our method introduces to a GNN is the dependency on the orientation of the neighbors around a node, which lies at the core of chirality.

The key component of the proposed framework is the message aggregation function. In a typical GNN, the messages incoming to a node from its neighbors are treated as a set and aggregated with a permutation-invariant function (sum, max, etc.). It makes the model unable to distinguish between chiral graphs with the same connectivity, but with different spatial arrangements. We re-invent this approach and introduce a message aggregation function that is sensitive to the spatial arrangement (order) of the neighbors. Our approach can be used in any chiral-sensitive graph domain where chirality can be expressed by an order of the neighboring nodes.

We apply that general order-sensitive message-passing framework in the context of molecular chirality to construct Chiral Edge Neural Network (ChiENN) layer. The ChiENN layer can be appended to most molecular GNN models to enable chirality sensitiveness. Our experiments show that ChiENN can be successfully used within existing GNN models and as a standalone model consisting of stacked ChiENN layers. In both cases, ChiENN outperforms current state-of-the-art methods in chiral-sensitive molecular property prediction tasks by a large margin. We make our code publicly available\footnote[1]{\url{https://github.com/gmum/ChiENN}}. 

Our contributions are as follows:
\begin{enumerate}
    \item We propose and theoretically justify a general order-sensitive message-passing scheme. Our method can be adapted to any chiral-sensitive graph domain where chirality can be expressed by an order of the neighboring nodes (\cref{sec:osmp-scheme}).
    \item We use the proposed framework to construct a novel ChiENN layer that enables chirality awareness in any GNN model in the domain of molecular graphs  (\cref{sec:chienn}). The proposed ChiENN can be applied to any 3D graph task with the notion of chirality.
    \item We evaluate and analyze the ChiENN layer and show that it outperforms current state-of-the-art methods in chiral-sensitive molecular property prediction tasks (\cref{sec:experiments}).
\end{enumerate}

\section{Related Work}
\label{sec:related_work}
\subsubsection{Explicit Tagging of Chiral Center.}
The most common approach for incorporating chirality into GNN is to use local or global chiral tags \cite{tag1,tag2,tag3}. Both local and global tagging can be seen in the following way. Every carbon atom with four non-equivalent constituents, called a chiral center, is given a tag (CCW or CW) describing the orientation of its constituents. The orientation is defined using the enumeration of constituents computed by an arbitrary algorithm. The constituent with the highest number (4) is positioned so that it points away from the observer. The curve passing through the constituents with numbers 1, 2, and 3 respectively determines a clockwise (CW) or counterclockwise (CCW) orientation of the chiral center. Although enumeration algorithms for global and local tagging differ (the latter is not explicitly used in practice), the expressivity of both methods is limited, as we show in \cref{sec:experiments}.

\subsubsection{3D GNNs with torsion angles.}
Some recent GNN models enrich graphs with 3D information, like distances between atoms \cite{geometric,mat}, angles between bonds \cite{schnet,dimenet}, and torsion angles between two bonds joined by another bond \cite{spherenet,gemnet}. As distances and angles are invariant to chirality, the torsion angles (that are negated upon reflection) are required for 3D GNN to express the chirality. However, even access to a complete set of torsion angles does not guarantee expressivity in chiral-sensitive tasks as shown in \cite{chiro}. Torsion angles are sensitive to bond rotations and can also be negated by the reflection of a non-chiral molecule. In \cite{chiro}, the authors proposed the ChIRo model that instead of embedding single torsion angles, embeds sets of torsion angles with a common bond. ChIRo is the current state-of-the-art method for chiral-sensitive tasks. In contrast to ChIRo, our proposed method does not incorporate distances, angles, or torsion angles. It only relies on the orientation of neighbors around a node, making it more general and easily adaptable to other chiral graph domains. Moreover, our experiments show that the ChiENN layer outperforms ChIRo by a large margin on chiral-sensitive molecular tasks (see \cref{sec:experiments}).

\subsubsection{Changing Aggregation Scheme.}
The method most related to our approach is the Tetra-DMPNN model from \cite{tetra-dmpnn} which replaces a classic message-passing scheme with a chiral-sensitive one. The proposed aggregation scheme is guided by local chiral tags, meaning that it relies on some arbitrary rules for enumerating neighbors and cannot be applied to nodes other than chiral centers. Moreover, the Tetra-DMPNN method is computationally expensive and does not scale with the number of possible neighbors of a chiral center, making the model useful only in the context of chemistry. Our approach provides a general, efficient, and scalable chiral-sensitive message passing and outperforms the Tetra-DMPNN on chiral-sensitive molecular tasks by a large margin (see \cref{sec:experiments}).


\section{Order-Sensitive Message-Passing Scheme}
\label{sec:osmp-scheme}

\subsubsection{Setting.} Let us consider a directed graph $G=(X, E)$ in which every node $x_i \in X$ is represented by a $N$-dimensional encoding ($x_i \in \R^N$). Edge $e_{ij}$ connects nodes $x_i$ and $x_j$ and is represented by $M$-dimensional encoding ($e_{ij} \in \R^M$). 

In addition, we assume that for every node $x \in X$, we are given an order $o$ of all its neighbors $o=(x_0,x_1,\ldots,x_{d-1})$. The order of neighbors forms a sequence, which stands in contrast to typical graphs, where neighbors are treated as an unordered set. Given a permutation $\pi$ on $\{0,1,\ldots,d-1\}$, we assume that two orders $o_1=(x_0,\ldots,x_{d-1})$ and $o_2=(x_{\pi(0)},\ldots,x_{\pi(d-1)})$ are equivalent if and only if $\pi$ is a shift i.e. $\pi(i) = (i+k) \Mod d$, for a fixed $k \in \Z$. In other words, the neighbors form the sequence on a ring.

One of the most common mechanisms in GNN is message-passing, which updates the representation of a node $x$ by the information coming from its neighbors $(x_0,\ldots,x_{d-1})$, which can be written as:
$$
x'=f(x;x_0,\ldots,x_{d-1}).
$$

In this paper, we are going to describe the general message-passing scheme, which is aware of the neighbors' order. Before that, we discuss possible choices of the aggregation function $f$.

\subsubsection{Vanilla message-passing as a permutation-invariant transformation.} Let us first discuss a basic case, where $f$ is a permutation-invariant function, i.e.
$$
f(x;x_0,\ldots,x_{d-1})=f(x;x_{\pi(0)},\ldots,x_{\pi(d-1)}),
$$
for every permutation $\pi$ of $\{0,1,\ldots,d-1\}$. This aggregation ignores the order of neighbors and lies in a heart of typical GNNs.

Let us recall that $f$ is permutation-invariant with respect to $\{x_0,x_1,\ldots,x_{d-1}\}$ if and only if it can be decomposed in the form \cite{deepsets}:
$$
f(x_0,x_1,\ldots,x_{d-1}) = \rho(\sum_{i=0}^{d-1} \phi(x_i)),
$$
for suitable transformations $\phi$ and $\rho$. In the context of graphs, a general form of a permutation-invariant aggregation of neighbors  $\{x_0,x_1,\ldots,x_{d-1}\}$ of $x$ is:
\begin{equation}
    \label{eq:vanilla-message-passing}
    x'=f(x;x_0,\ldots,x_{d-1})=\rho(x;\sum_{i=0}^{d-1} \phi(x;x_i)),
\end{equation}
for suitable transformations $\phi$ and $\rho$. By specifying $\rho,\phi$ as neural networks, we get the basic formula of vanilla message-passing.

\subsubsection{Shift-invariant aggregation.}
Vanilla message-passing relies on permutation-invariant aggregation and it does not take into account the neighbor's order. Thus we are going to discuss the weaker case of aggregation function $f$ and assume that $f$ is shift-invariant, i.e.
$$
f(x;x_{0},\ldots,x_{d-1})=f(x;x_{0+p},\ldots,x_{d-1+p}),
$$
for any shift by a number $p\in{0,\ldots,d-1}$, where the additions on indices are performed modulo $d$. This assumption is consistent with our initial requirement that shifted orders are equivalent. 

The following theorem gives a general formula for shift-invariant mappings. 
\begin{theorem}
\label{theorem:1}
The function $f$ is shift-invariant if and only if $f$ can be written as:
\begin{equation*} \label{eq:o}
f(x_0,\ldots,x_{d-1}) = \sum_{p=0}^{d-1} g(x_{0+p},\ldots,x_{d-1+p})
\end{equation*}
for an arbitrary function $g$. 
\end{theorem}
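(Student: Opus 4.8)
The plan is to prove the two implications separately and directly: the ``if'' direction by a reindexing argument over the cyclic group $\Z/d\Z$, and the ``only if'' direction by exhibiting an explicit choice of $g$, namely a rescaled copy of $f$ itself.

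For the ``if'' direction I would assume $f(x_0,\ldots,x_{d-1}) = \sum_{p=0}^{d-1} g(x_{0+p},\ldots,x_{d-1+p})$ for some function $g$, with all index arithmetic taken modulo $d$, and fix a shift $q \in \{0,\ldots,d-1\}$. Then
\begin{align*}
f(x_{0+q},\ldots,x_{d-1+q}) &= \sum_{p=0}^{d-1} g(x_{0+q+p},\ldots,x_{d-1+q+p}) \\
&= \sum_{r=0}^{d-1} g(x_{0+r},\ldots,x_{d-1+r}) = f(x_0,\ldots,x_{d-1}),
\end{align*}
where the middle step substitutes $r=(p+q)\bmod d$, which is a bijection of $\{0,\ldots,d-1\}$. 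Hence $f$ is shift-invariant.

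For the ``only if'' direction I would assume $f$ is shift-invariant and set $g := \tfrac1d f$, which makes sense because the outputs of $f$ lie in a real vector space. Then for every input tuple,
\begin{align*}
\sum_{p=0}^{d-1} g(x_{0+p},\ldots,x_{d-1+p}) &= \frac1d \sum_{p=0}^{d-1} f(x_{0+p},\ldots,x_{d-1+p}) \\
&= \frac1d \sum_{p=0}^{d-1} f(x_0,\ldots,x_{d-1}) = f(x_0,\ldots,x_{d-1}),
\end{align*}
using shift-invariance of $f$ in the second equality. This produces a valid $g$ and completes the argument.

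I do not expect a genuine obstacle here; the proof is short, and the only points requiring care are (i) reading every index addition modulo $d$, so that the composition of two shifts is again a shift and the substitution $r=(p+q)\bmod d$ is legitimate, and (ii) the mild requirement that the codomain of $f$ admit division by $d$, which is satisfied since the encodings are real vectors. Conceptually the argument is the Reynolds-operator observation: averaging over the finite group $\Z/d\Z$ is a projection onto the space of shift-invariant functions, and the theorem simply records that this projection is surjective onto all of them. In the intended application one replaces the arbitrary $g$ by a neural network, obtaining a shift-invariant aggregation that, after additionally breaking the reflection symmetry, becomes order-sensitive.
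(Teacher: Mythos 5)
Your proposal is correct and follows essentially the same route as the paper: the ``only if'' direction takes $g=\tfrac1d f$ exactly as in the paper's proof, and the ``if'' direction is the same cyclic-sum observation, which you merely spell out via the reindexing $r=(p+q)\bmod d$ that the paper leaves implicit. No gaps; your added remarks on modular indexing and divisibility by $d$ are fine but not needed beyond what the paper already assumes.
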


\begin{proof}
If $f$ is shift invariant, then $f(x_0,\ldots,x_{d-1})=f(x_{0+p},\ldots,x_{d-1+p})$ for every $p$, and consequently
$$
f(x_0,\ldots,x_{d-1}) = \sum_{p=0}^{d-1} \frac{1}{d}f(x_{0+p},\ldots,x_{d-1+p}).
$$
On the other hand, if the function $f$ can be written as $ \sum_{p=0}^{d-1} g(x_{0+p},\ldots,x_{d-1+p})$, then it is shift-invariant for arbitrary function $g$.
\end{proof}

Following the above theorem, we get a general formula for shift-invariant aggregation applicable to graphs:
\begin{equation}\label{eq:shiftinv}
x'= \rho(x;\sum_{p=0}^{d-1} \psi(x;x_{{0+p}},\ldots,x_{d-1+p})),
\end{equation}
for suitable $\rho$ and $\psi$, where all additions are performed modulo $d$. 

Now, we want to ensure that our function $f$ is not only shift-invariant but also order-sensitive

\subsubsection{Order-sensitive message-passing.}

Let us assume that we are in the class of shift-invariant transformations. We are going to specify the formula \eqref{eq:shiftinv} to obtain ab aggregation, which is sensitive to any permutation other than shift. More precisely, we say that $f$ is order-sensitive if and only if for every permutation $\pi$, we have:
$$
f(x;x_0,\ldots,x_{d-1}) = f(x;x_{\pi(0)},\ldots,x_{\pi(d-1)}) \iff \pi(i) = (i + k) \Mod d.
$$

Let us investigate typical functions $\psi$ in formula \eqref{eq:shiftinv}, which can be implemented using neural networks. We start with the simplest case, where $\psi$ is linear. Then 
$$
\sum_{p=0}^{d-1}\psi(x_{0+p},\ldots,x_{d-1+p}) 
= \sum_{p=0}^{d-1}\sum_{i=0}^{d-1} w_i x_{i+p}
= \sum_{i=0}^{d-1}w_i\sum_{p=0}^{d-1} x_{i+p}
= \sum_{i=0}^{d-1}w_i\sum_{j=0}^{d-1} x_{j}, 
$$
does not depend on the order of the neighbors $(x_0, ..., x_{d-1})$. To construct more complex functions, we use an arbitrary Multi-Layer Perceptron (MLP) as $\psi$. Since MLPs are universal approximators (for a sufficiently large number of hidden units), we can find such parameters $\theta$ that $\sum_{p=0}^{d-1}\psi_{\theta}(x_{\pi(0)+p},\ldots,x_{\pi(d-1)+p}) $ returns a different value for every permutation $\pi$ that is not a shift. Therefore our aggregation scheme with $\psi$ given by MLP can learn order-sensitive mapping. 

Following the above observations, we implement our order-sensitive message-passing using MLP as $\psi$. To match our construction to various numbers of neighbors in a graph, we restrict $\psi$ to be $k$-ary  (denoted as $\psi^k$) for some fixed $k > 1$ and overload it so that:
$$
\psi^k(x_0,\ldots,x_{d-1})=
\psi^k(x_0,\ldots,x_{d-1},\underbrace{0,\ldots,0)}_{k-d}
\text{ for }d < k.
$$.

Given that, we implement the \cref{eq:shiftinv} with the following neural network layer:
\begin{equation}
\label{eq:osmp}
    \begin{gathered}
    x' = Wx + \sum_{p=0}^{d-1}\psi^k(x_{0+p},...,x_{k-1+p}), \\
    \psi^k(x_{0+p},...,x_{k-1+p}) = W_1\sigma (W_2(x_{0+p} | ... | x_{k-1+p})).
    \end{gathered}
\end{equation}
Our $k$-ary message function $\psi^k$ is composed of concatenation operator $|$ and two-layer MLP with ELU as $\sigma$. Intuitively, the output of
$\psi^k(x_{0+p}, ..., x_{k-1+p})$ can be seen as a message obtained jointly from $k$ consecutive neighbors starting from a neighbor $p$ in order $(x_0, ..., x_{d-1})$ which is illustrated in \cref{fig:fk}. 

\begin{figure}[t]
  \includegraphics[width=0.99\linewidth]{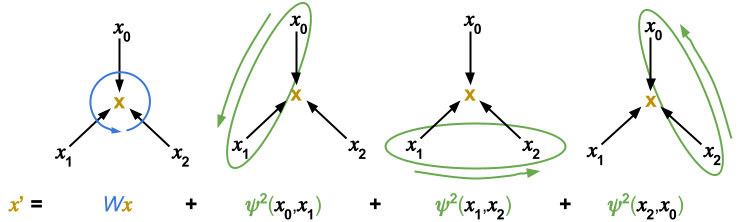}
  \centering
  \caption{An illustration of our update rule for node $x$ with 3 ordered neighbors $(x_0, x_1, x_2)$ and for $k=2$. We see that $\psi^k$ is used to embed pairs of consecutive nodes.}
  \label{fig:fk}
\end{figure}

\section{ChiENN: Chiral-Aware Neural Network}
\label{sec:chienn}

In this section, we apply the order-sensitive message-passing framework to molecular graphs. We show that order-sensitive aggregation is a key factor for embracing molecular chirality. Roughly speaking, in contrast to vanilla message-passing, the proposed \our{} (Chiral-aware Edge Neural Network) is able to distinguish enantiomers, where one molecule is a mirror image of the second. Although we evaluate the ChiENN model in the context of molecular property prediction, the proposed model can be applied to any 3D graph task with the notion of chirality.

To construct \our{} based on our order-sensitive message-passing scheme from \cref{eq:osmp}, we need to define a notion of neighbors' order in molecular graphs that grasps the concept of chirality (see \cref{fig:neighbors_ordering_intuition}). We introduce this notion of order for edge (dual) molecular graphs and provide a simple transformation from standard molecular graphs to edge molecular graphs. Therefore the rest of the section is organized into three subsections:
\begin{enumerate}
    \item \textbf{Edge Graph} describing the transformation from a molecular graph to its edge (dual) form used in our ChiENN model,
    \item \textbf{Neighbors Order} defining the order of the neighbors in an edge graph,
    \item \textbf{Chiral-Aware Update} constructing order-sensitive update rule using our order-sensitive framework from the \cref{sec:osmp-scheme}.
\end{enumerate}

\begin{figure}
  \includegraphics[width=0.7\linewidth]{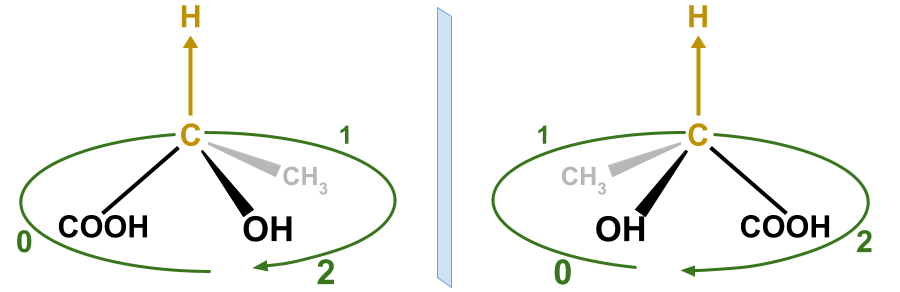}\centering
  \caption{An intuitive illustration of the neighbor ordering in a molecule. First, we pick a directed bond from atom C to H and then order the rest of the neighbors around that bond. We observe that for a chiral molecule (left) and its mirror image (right), we obtain different orders of the $COOH$, $CH_3$, and $OH$ constituents.}
  \label{fig:neighbors_ordering_intuition}
\end{figure}

\subsection{Edge Graph}
\label{sec:chienn-edge-graph}
Let us suppose, we have a directed graph $G=(X, C, E)$ that represents a concrete conformation (3D embedding) of a molecule. The node encoding $x_i \in X$ corresponds to an $i$-th atom from a molecule, $c_i \in C \subseteq \R^3$ are its coordinates in 3D space, and the edge encoding $e_{ij}\in E$ represents a bond between $i$-th and $j$-th atoms. 

To make the definition of neighbor order straightforward, our ChiENN model operates on an edge (dual) graph $G'=(X', C', E')$ which swaps nodes with edges from the original graph $G$. It means that the node $x_{ij} \in X'$ represents the edge $e_{ij}\in E$, while the edge $e_{ij, jk} \in E'$ represents the node $x_j$ that connects edge $e_{ij}\in E$ with $e_{jk} \in E$. Similarly, $c'_{ij} \in \R^3 \times \R^3$ is now a 3D coordinate vector that links positions $c_i$ and $c_j$.  Formally, we have:
\begin{align*}
\label{eq:edge-graph}
    X' & = \{x_{ij}=e_{ij} : e_{ij} \in E \}, \\
    C' & = \{c_{ij}=c_i | c_j : c_i, c_j \in C, e_{ij} \in E \}, \\
    E' & =\{e_{ij, jk}=e_{ij} | x_j | e_{jk}: e_{ij}, e_{jk} \in E, x_j \in X \},
\end{align*}
where $|$ stands for a concatenation operator. Clearly, the constructed edge graph $G'=(X', C', E')$ can be fed to any GNN that can take as an input the original graph $G=(X, C, E)$.

\subsection{Neighbors Order}
\label{sec:chienn-order}
In an edge molecular graph $G=(X, C, E)$, a node $x_{jk} \in E$ represents a directed bond from atom $j$ to atom $k$ in the original molecule. It is assigned with a 3D vector $c_{jk} \in C \subseteq \R^3 \times \R^3$ spanned from atom $j$ to atom $k$. Therefore, we will sometimes refer to nodes as if they were 3D vectors.

Let us consider the node $x_{jk}$ and the set of its incoming neighbors:  $N(x_{jk})=\{x_{i_1j}, x_{i_2j}, ..., x_{i_dj}\}$. By construction of $G$, every node $x_{jk}$ has a corresponding parallel node $x_{kj}$. For simplicity, we will treat this parallel node separately and exclude it from the set of neighbors, i.e. $x_{kj} \notin N(x_{jk})$.

The construction of the neighbors $N(x_{jk})$ order is illustrated in \cref{fig:neighbors_ordering} and consists of two steps:
\begin{enumerate}
    \item Transformation: first, we perform a sequence of 3D transformations on $x_{jk}$ and $N(x_{jk})$ to make $x_{jk}$ anchored to coordinate origin, perpendicular to $yz$ plane and pointed away from the observer (see \cref{fig:neighbors_ordering} b)).
    \item Sorting: second, we project the transformed neighbors $N(x_{jk})$ to the $yz$ plane and sort the projections by the angle to the $y$ axis.
\end{enumerate}
Details of the above construction are presented in the supplementary materials.

\begin{figure}[t]
  \includegraphics[width=0.87\linewidth]{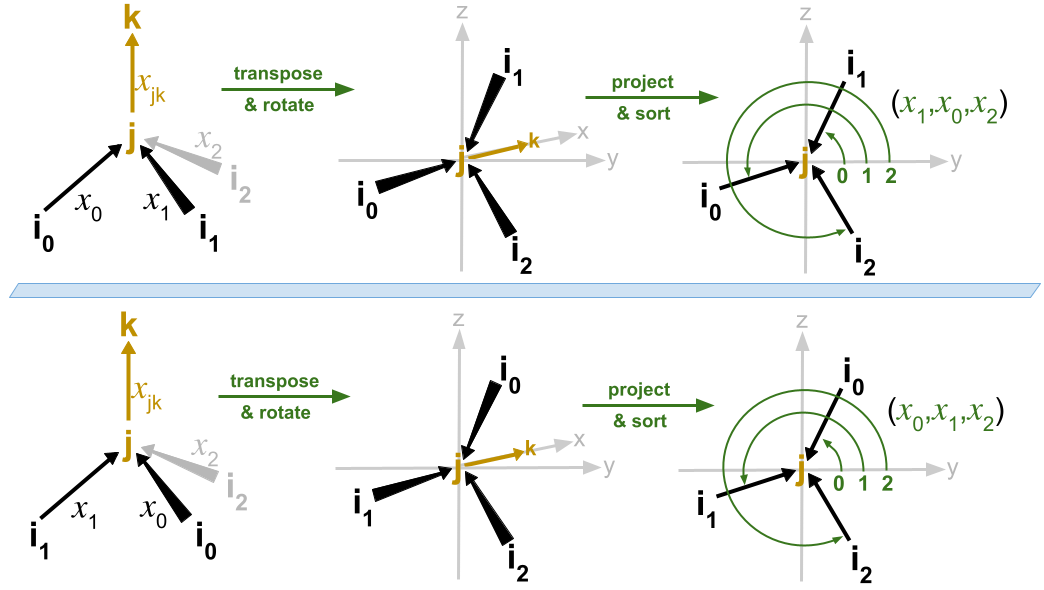}\centering
  \caption{An illustration of ordering the neighbors $\{x_0, x_1, x_2\}$ of $x_{jk}$ for a chiral molecule (top) and its mirror image (bottom) around the chiral center $j$. First, we perform a sequence of 3D transformations on $x_{jk}$ and its neighbors to make $x_{jk}$ anchored to coordinate origin, perpendicular to $yz$ plane and pointed away from the observer. Next, we project the transformed neighbors to the $yz$ plane and sort the projections by the angle to the $y$ axis. We see that for the chiral molecule (top) and its mirror image (bottom), we obtained non-equivalent orders $(x_1, x_0, x_2)$ and $(x_0, x_1, x_2)$.}
  \label{fig:neighbors_ordering}
\end{figure}

Two observations can be made regarding the above construction:
\begin{observation}
The above construction returns non-equivalent orders for a chiral center and its mirror image.
\end{observation}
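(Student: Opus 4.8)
The plan is to show that reflecting the molecule reverses the resulting cyclic order of neighbors, and that the reversal of a cyclic order of $d\ge 3$ distinct elements is never a shift of the original.

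First I would fix notation. Let atom $j$ be the chiral center, let $x_{jk}$ be the chosen directed bond, and write the incoming neighbors of $x_{jk}$ as direction vectors $v_1,\dots,v_d$ emanating from $j$, where $d\ge 3$ for a chiral center (it has at least three further non-equivalent constituents besides the bond $x_{jk}$ and its parallel $x_{kj}$). A mirror image of the molecule is obtained by applying a fixed improper isometry $R$ (with $\det R=-1$) to all atomic coordinates. The transformation step of the construction applies a \emph{proper} rigid motion $T$ (a composition of rotations and a translation, hence orientation-preserving) that brings $x_{jk}$ into the canonical pose: anchored at the origin, lying on the $x$-axis, and pointing away from the observer. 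Running the same construction on the mirror molecule applies some proper rigid motion $T'$ that brings $Rx_{jk}$ into the same canonical pose.

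The key step is to compare the two canonical configurations through the map $Q:=T'RT^{-1}$. From $\det T=\det T'=1$ and $\det R=-1$ we get $\det Q=-1$; and since $Q$ fixes the origin and sends the canonical image of $x_{jk}$ (from the original molecule) to the canonical image of $Rx_{jk}$ (from the mirror molecule) — which the canonical pose pins down to the \emph{same} vector on the $x$-axis — $Q$ is a linear orthogonal map fixing the $x$-axis. Hence $Q$ splits as the identity on the $x$-axis together with an orthogonal map $Q'$ on the $yz$-plane with $\det Q'=-1$, i.e.\ $Q'$ is a planar reflection. Because $Q$ maps the transformed neighbor vectors of the original molecule onto those of the mirror molecule, and projection onto the $yz$-plane commutes with $Q'$, the projected neighbor rays of the mirror are exactly the $Q'$-images of those of the original. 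A planar reflection reverses the cyclic (angular) order of any finite family of distinct rays through the origin, so sorting by angle yields, for the mirror molecule, the reverse of the cyclic order obtained for the original — up to a cyclic shift coming from the leftover rotational freedom about the $x$-axis in the choices of $T$ and $T'$, which is harmless since orders are identified up to shift.

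Finally I would observe that here a reversal is a genuinely different cyclic order: if $(x_0,\dots,x_{d-1})$ were equivalent to its reversal, then matching the unique occurrence of $x_0$ in both would force $x_1=x_{d-1}$, which is impossible for distinct elements with $d\ge 3$. The main obstacle is entirely in the bookkeeping — arguing carefully that $T,T'$ exist and are proper, that the residual $x$-axis rotation only shifts the order, and that the projected neighbor rays are distinct so the sort is well-defined (the degenerate collinear-projection case being excluded by the tie handling detailed in the supplementary material). Once these points are settled, the orientation-reversal argument via $\det Q'=-1$ gives the claim.
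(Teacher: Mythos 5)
Your argument is correct, and it is worth noting that the paper itself does not prove this Observation at all: it is stated without proof and justified only by the illustrative example in the figure (where the two mirror-image molecules yield the orders $(x_1,x_0,x_2)$ and $(x_0,x_1,x_2)$), with the appendix describing only the transformation and sorting steps, not why they separate enantiomers. Your write-up supplies exactly the missing formal content, and the structure is the natural one: (i) the canonicalizing maps $T,T'$ built from a translation and rotations about the $x$- and $y$-axes are proper isometries, so $Q=T'RT^{-1}$ is orthogonal with $\det Q=-1$ and fixes the canonical bond vector on the $x$-axis, hence restricts to a reflection $Q'$ of the $yz$-plane; (ii) $Q'$ reverses the cyclic angular order of the projected neighbor rays; (iii) for $d\ge 3$ distinct neighbors (a chiral center has $d=3$ after excluding the parallel node $x_{kj}$), a reversal is never a shift. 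One small imprecision: you attribute the possible shift to ``leftover rotational freedom about the $x$-axis in the choices of $T$ and $T'$,'' but the construction in the appendix is deterministic; the shift actually arises because $Q'$ may be a reflection across an arbitrary line through the origin in the $yz$-plane, so the mirror's sorted sequence is a reversal of the original's up to the choice of starting element. Since orders are identified up to shift this does not affect the conclusion. You are also right to flag the degenerate case of a neighbor projecting to zero; the paper explicitly excludes it as not occurring in practice, so conditioning on it is consistent with the paper's own assumptions.
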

\begin{observation}
Any $SE(3)$ transformation of a molecule coordinates $C$ and any internal rotation of its bonds (conformation) can only change the shift of the order $o$, resulting in equivalent order $o'$.
\end{observation}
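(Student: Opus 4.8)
The plan is to show that the neighbour-ordering construction of \cref{sec:chienn-order} is \emph{equivariant} under global rigid motions, up to the single degree of freedom it cannot canonicalize — a rotation about the axis of the chosen directed bond $c_{jk}$ — and that this residual rotation affects the output only by a cyclic shift. First I would isolate exactly what data the construction consumes: only atom $j$'s position, the bond vector $c_{jk}$, and the neighbour bond vectors $c_{i_1 j},\ldots,c_{i_d j}$, i.e. the tuple $(c_{jk};\,c_{i_1 j},\ldots,c_{i_d j})$ of vectors emanating at atom $j$. Every downstream step (the anchoring translation, the rotation aligning $c_{jk}$ perpendicular to the $yz$ plane pointing away from the observer, the projection to the $yz$ plane, the sorting by angle to the $y$ axis) is a deterministic function of this tuple alone, so it suffices to track how the tuple transforms.

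For the $SE(3)$ part, let $g(c)=Rc+t$ with $R\in SO(3)$ act on all atom coordinates. The canonicalizing transformation $T$ used by the construction is an element of $SE(3)$ sending atom $j$ to the origin and the direction of $c_{jk}$ to the fixed "away" axis; let $T'$ be the corresponding transformation computed from the $g$-transformed configuration. Then $T'\circ g$ shares exactly those defining properties on the original configuration, so $(T'\circ g)\circ T^{-1}$ lies in the stabilizer of the origin and of that ray. Because $\det R=+1$, this stabilizer is precisely $SO(2)$ acting on the $yz$ plane — no reflection can appear, and this is the only place orientation-preservation enters (it is exactly what separates this statement from Observation~1). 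Hence the canonicalized neighbour tuple is the original one rotated by a single $Q\in SO(2)$; projection to the $yz$ plane commutes with $Q$, so every neighbour's angle to the $y$ axis is incremented by the common angle of $Q$, and sorting the incremented angles returns the original cyclic sequence rotated by a fixed amount. That rotated sequence is, by definition, an order $o'$ equivalent to $o$.

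For conformational changes, I would use that every conformer is reached by composing rotations about single acyclic (rotatable) bonds, so it is enough to treat one rotation about a bond $ab$. Deleting $ab$ splits the molecular graph into a stationary component and a component rigidly rotated about the line through $c_a$ and $c_b$. If atom $j$ together with \emph{all} of $i_1,\ldots,i_d,k$ lies in the stationary component, the tuple $(c_{jk};c_{i_1 j},\ldots,c_{i_d j})$ is untouched and $o'=o$ verbatim. If the whole local fragment lies in the rotated component, the tuple is acted on by one element of $SO(3)$, which is the $SE(3)$ case already handled, so $o'$ is again a shift of $o$. The only remaining configuration — atom $j$ and a strict subset of its neighbours landing on opposite sides of $ab$ — forces a cycle through $ab$ and $j$, i.e. $ab$ is a ring bond; such bonds are excluded from those whose rotation constitutes a conformational change (conformers being parameterized by torsions about acyclic bonds), and I would record this as the standing convention. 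Combining the three cases gives the claim.

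\textbf{Main obstacle.} The equivariance bookkeeping is routine; the genuinely delicate points are (i) pinning the residual symmetry of the canonicalization down to $SO(2)$ rather than $O(2)$, which is where $\det R=+1$ must be invoked and which is precisely the asymmetry with Observation~1, and (ii) fixing the notion of "internal rotation of bonds" (rotations about acyclic bonds) tightly enough that the local fragment around a stereocenter behaves rigidly. Once those two points are settled, the rest is a short computation.
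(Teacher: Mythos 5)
The paper states this observation without any formal proof (the appendix only details the construction), so your attempt has to be judged on its own merits. Your $SE(3)$ half is sound and is the natural argument: both canonicalizations send $c_j$ to the origin and the direction of $c_{jk}$ to the distinguished axis, so the discrepancy $(T'\circ g)\circ T^{-1}$ fixes the origin and that ray, and because every rotation part has determinant $+1$ it must be a rotation about the bond axis, i.e.\ a common angular offset in the $yz$ plane; sorting then produces a cyclic shift, and this is precisely the point where orientation-preservation separates this statement from Observation~1.

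The genuine gap is in your conformational case analysis. The ``remaining configuration'' (atom $j$ and a strict subset of its neighbours on opposite sides of the deleted bond $ab$) does \emph{not} force a cycle: if a neighbour $x$ of $j$ lies in the other component after deleting $ab$, then the edge $jx$ must itself be the deleted bond, i.e.\ $j\in\{a,b\}$ and $x$ is the other endpoint. Far from being excluded, this is the generic situation — it occurs whenever one rotates a bond incident to $j$, e.g.\ the bond from the chiral carbon to the $OH$ group in \cref{fig:neighbors_ordering_intuition} — and your argument as written would wrongly classify it as a ring-bond rotation and leave it unhandled. Fortunately the case is repairable rather than fatal: both endpoints $a,b$ of the rotated bond lie on the rotation axis and are therefore fixed by the internal rotation, while all remaining neighbours of $j$ lie in a single component; hence the whole local tuple $(c_j, c_k, c_{i_1},\ldots,c_{i_d})$ is transformed by one rigid motion — the identity if $j$'s side is the stationary one, or the rotation about the line through $c_a$ and $c_b$ otherwise — and the conclusion follows from your $SE(3)$ argument. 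With that correction (and keeping your stated convention that conformational changes are torsions about acyclic bonds, so ring bonds are out of scope), the proof goes through.
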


Therefore, the above construction grasps the notion of chirality in a molecule and is additionally $SE(3)$- and conformation-invariant.

We artificially excluded $x_{kj}$ from a set of $x_{jk}$ neighbors, because its parallel to $x_{jk}$ and therefore its angle to $y$ axis after the sequence of transformations is undefined. In theory, another neighbor $x_{ij}$ can also be parallel to $x_{jk}$ and should also be excluded from the neighbor set, but we have not observed such a case in our experiments and decided not to take it into account.

\subsection{Chiral-Aware Update}
\label{sec:chienn-update}
Once we transformed a molecular graph to edge (dual) molecular graph $G=(X, E, C)$ using transformation from \cref{sec:chienn-edge-graph} and assigned every node $x_{jk}$ with an order of its neighbors $(x_1, ..., x_{d})$ using construction from \cref{sec:chienn-order}, we can define the order-sensitive update rule of our ChiENN model:
\begin{equation}
\label{eq:chienn}
    \begin{gathered}
    x_{jk}' = W_1x_{jk} + W_2x_{kj} + \sum_{p=0}^{d-1}\psi^k(x_{0+p},...,x_{k-1+p}), \\
    \psi^k(x_{0+p},...,x_{k-1+p}) = W_3\sigma (W_4(x_{0+p} | ... | x_{k-1+p})),
    \end{gathered}
\end{equation}
where $\psi^k$ is $k$-ary message function and $\sigma$ is ELU non-linear activation. The update rule is almost the same as that from \cref{eq:osmp}, but here we add a term that explicitly embeds $x_{kj}$ node, which was artificially excluded from the order of the $x_{jk}$ neighbors.

\section{Experiments}
\label{sec:experiments}

We compare ChiENN with several state-of-the-art models on a variety of chiral-sensitive tasks. Details of experiments are described in Section~\ref{sec:setup}, while the results can be found in Section~\ref{sec:benchmark}. Furthermore, to validate design choices behind ChiENN we also conducted an ablation study, presented in Section~\ref{sec:ablations}.

\subsection{Set-up}
\label{sec:setup}

\subsubsection{Datasets.} We conduct our experiments on five different datasets affected by molecule chirality. First, two datasets proposed in \cite{chiro} which are designed specifically to evaluate the capability of a model to express chirality: classification of tetrahedral chiral centers as R/S (which should be a necessary, but not sufficient, condition to learn meaningful representations of chiral molecules); and enantiomer ranking, in which pairs of enantiomers with enantioselective docking scores were selected, and the task was to predict which molecule of the pair had a lower binding affinity in a chiral protein pocket. 

Second, the binding affinity dataset, which is an extension of the previously described enantiomer ranking, with the same underlying molecules, but the task being regression of the binding affinity.

Additionally, we take two datasets from the MoleculeNet benchmark \cite{moleculenet} that do not explicitly require prediction of molecule chirality, but contain some percentage of molecules with chiral centers, and the underlying biological task in principle might be chirality-dependant: BACE, a binary classification dataset for prediction of binding results for a set of inhibitors of human $\beta$-secretase 1 (BACE-1) \cite{bace}; and Tox21, a multilabel classification dataset containing qualitative toxicity measurements on 12 different targets, including nuclear receptors and stress response pathways.

\begin{table*}[t]
\centering
\caption{Comparison of ChiENN-based approaches with the reference methods on chiral-sensitive tasks. Methods are split into groups by the underlying base model, except for the bottom group which includes models specifically designed to be chiral-sensitive. We \textbf{bold} the best results in every group and \underline{underline} the best results across all groups. All variations of our method (ChiENN, SAN+ChiENN, and GPS+ChiENN) significantly outperform current state-of-the-art chiral-sensitive models. Note that for the R/S task, we omitted the results for models with chiral tags encoded in node features, for which the task is trivial.}
\scriptsize
\label{tab:benchmark-chiral}
    \begin{tabularx}{0.75\textwidth}{lYYY}
    \toprule
    \multirow{3}{*}{Model} & \multirow{2}{*}{R/S} & Enantiomer & Binding \\
    & & ranking & affinity \\
    \cmidrule(lr){2-4}
    & Accuracy $\uparrow$ & R. Accuracy $\uparrow$ & MAE $\downarrow$ \\
    \midrule
DMPNN       & 0.500±0.000                      & 0.000±0.000                       & 0.310±0.001            \\
DMPNN+tags  & - & \textbf{0.701±0.003}          & \textbf{0.285±0.001}              \\
    \midrule
GPS         & 0.500±0.000                       & 0.000±0.000                      & 0.330±0.003               \\
GPS+tags    & - & 0.669±0.037          & 0.318±0.004                 \\
GPS+ChiENN  & \underline{\textbf{0.989±0.000}}          & \textbf{0.753±0.004}          & \textbf{0.258±0.001}                \\
    \midrule
SAN         & 0.500±0.000                       & 0.000±0.000                      & 0.317±0.004         \\
SAN+tags    & - & 0.722±0.004          & 0.278±0.003                 \\
SAN+ChiENN  & \textbf{0.987±0.001}                    & \underline{\textbf{0.764±0.005}}            & \underline{\textbf{0.257±0.002}}        \\
    \midrule
ChIRo       & 0.968±0.019          & 0.691±0.006          & 0.359±0.009       \\
Tetra-DMPNN & 0.935 ±0.001                   & 0.690±0.006             & 0.324±0.026                   \\
ChiENN      & \underline{\textbf{0.989±0.000}}          & \textbf{0.760±0.002} & \textbf{0.275±0.003}             \\
    \bottomrule
    \end{tabularx}
\end{table*}

\subsubsection{Reference methods.} As reference models we consider several state-of-the-art neural network architectures for processing graphs, both chirality-aware and general: GPS \cite{gps}, SAN \cite{san}, DMPNN \cite{dmpnn}, ChIRo \cite{chiro}, and Tetra-DMPNN \cite{tetra-dmpnn}. For models not designed to process chirality, that is DMPNN, GPS, and SAN, we additionally considered their variants with chiral atom tags included in the node features, similar to \cite{chiro}. For the proposed approach we consider both a pure model obtained by stacking several ChiENN layers, as well as combining ChiENN layers with other architectures (ChiENN+GPS and ChiENN+SAN).

\subsubsection{Training details.} All models were trained using Adam optimizer for up to 100 epochs, with a cosine learning rate scheduler with 10 warm-up epochs and gradient norm clipping, following the set-up of \cite{gps}. Cross-entropy and L1 loss functions were used for classification and regression, respectively. Note that in contrast to \cite{chiro}, to keep the set-up consistent across models we did not use triplet margin loss for ChIRo, and observed worse results than reported in \cite{chiro}. 

We also performed a grid search with the identical budget (see \cref{apx:exp-grid}). For all datasets and models, we reported results averaged from three runs.

For enantiomer ranking, binding affinity, and R/S, we used data splits provided by \cite{chiro} and for BACE and Tox21, we used random splits with a train-valid-test ratio of 7:1:2. For each model and dataset, we report mean results from 3 independent runs with the best parameters picked by grid search.

\subsubsection{Evaluation.} Note that for the binding rank task we used accuracy modified with respect to \cite{chiro}. We required the difference between the predicted affinity of two enantiomers to be higher than the threshold of 0.001. This led to ranking accuracy being equal to 0 for models unable to distinguish chiral molecules.

\subsection{Comparison with Reference Methods}
\label{sec:benchmark}

In this section, we compare ChiENN-based networks with state-of-the-art reference architectures using the experimental setting described in \cref{sec:setup}. 

\subsubsection{Chiral-sensitive tasks.} The results on chiral-sensitive tasks are presented in Table~\ref{tab:benchmark-chiral}. For both the enantiomer ranking and binding affinity, ChiENN-based approaches achieved the best results, producing a significant improvement in performance over the state-of-the-art chiral-aware architectures, that is ChIRo and Tetra-DMPNN. For both GPS and SAN, there was a significant improvement in performance due to the addition of ChiENN layers when compared to chiral tag inclusion. It demonstates that ChiENN model can enable chiral-awareness  demonstrating the general usefulness of the proposed layer, and the fact that it can be combined with a model preferred in a given task.

Finally, as expected, all of the chirality-aware methods can properly distinguish chiral centers in the R/S task, while the baselines that do not capture the concept of chirality (DMPNN, GPS and SAN) cannot. Note that for this task, we omitted the results for models with chiral tags encoded in node features, for which the task is trivial.

\subsubsection{Remaining tasks.} The results on BACE and Tox21 tasks are in Table~\ref{tab:benchmark-rest}. We see that the ChiENN model achieves results comparable to state-of-the-art models, however the influence of chirality-sensitiveness on these tasks is not clear. For SAN we actually observed a slight drop in performance when using ChiENN layers, and for GPS the results remained roughly the same. The possible explanations for that might be either 1) lack of importance of chirality on predicted tasks, or 2) small dataset size, leading to overfitting in presence of chiral information. Our conclusion is that ChiENN layers significantly improve the performance in chiral-sensitive tasks, and produce comparable results in the other tasks, where the influence of chirality is not clear. We believe that further investigation on the influence of chirality on the tasks commonly used in the molecular property prediction domain would be beneficial and we leave it for future work.  


\begin{table*}[h]
\centering
\caption{Comparison of ChiENN-based approaches with the reference methods on tasks \textbf{not} explicitly requiring chirality. 
We see that the ChiENN model achieves results comparable to state-of-the-art models.}
\scriptsize
\label{tab:benchmark-rest}
    \begin{tabularx}{0.6\textwidth}{lYY}
    \toprule
    \multirow{2}{*}{Model} & BACE & Tox21 \\
    \cmidrule(lr){2-3}
    & AUC $\uparrow$ & AUC $\uparrow$ \\
    \midrule
DMPNN               & \textbf{0.847±0.015} & 0.813±0.008          \\
DMPNN+tags          & 0.840±0.004          & \textbf{0.824±0.006}          \\
    \midrule
GPS       & \textbf{0.841±0.004}          & 0.821±0.000          \\
GPS+tags            & 0.812±0.017          & \textbf{0.825±0.002}          \\
GPS+ChiENN          & 0.839±0.008          & 0.821±0.007          \\
    \midrule
SAN                  & \underline{\textbf{0.846±0.012}}          & \textbf{0.842±0.007}          \\
SAN+tags              & 0.829±0.009          & 0.841±0.004          \\
SAN+ChiENN  &  0.826±0.014          & 0.834±0.005          \\
    \midrule
    ChIRo       &  0.815±0.010          & \underline{\textbf{0.847±0.005}} \\
Tetra-DMPNN   & 0.824±0.017          & 0.807±0.003          \\
ChiENN               & \textbf{0.838±0.003}          & 0.838±0.003    \\
    \bottomrule
    \end{tabularx}
\end{table*}

\subsection{Ablation Studies}
\label{sec:ablations}

\subsubsection{Comparison of $k$-ariness of the message function.} We began with an analysis of the impact of $k$-ariness (Equation~\ref{eq:osmp}) of the message function used by ChiENN. Specifically, in this experiment, we used the pure variant of ChiENN, which is a graph neural network using ChiENN layers as message-passing layers. We varied $k \in \{1, 2, 3\}$, where $k = 1$ disables the ability of the network to distinguish enantiomers as it collapses our order-sensitive message passing scheme from \cref{eq:osmp} to vanilla message-passing from \cref{eq:vanilla-message-passing}. We considered values of $k$ up to 3 since it corresponds to the airiness of standard chiral centers observed in the edge graphs (see \cref{sec:chienn-edge-graph}) of molecules.

The results are presented in Table~\ref{tab:ablation-k}. As expected, choosing $k = 1$ leads to a failure in distinguishing enantiomers (makes message passing permutation invariant), as demonstrated by minimum performance in R/S and enantiomer ranking tasks. Interestingly, for most datasets choosing $k = 2$ was sufficient, leading to a comparable performance to $k = 3$. The only exception to that was BACE dataset, for which a noticeable drop in performance was observed when using $k = 2$. We used $k = 3$ in the remainder of this paper.

\begin{table*}[t]
\caption{Comparison of $k$-ariness of the message function.}
\scriptsize
\label{tab:ablation-k}
    \begin{tabularx}{\textwidth}{cYYYYY}
    \toprule
    \multirow{3}{*}{$k$-ary} & \multirow{2}{*}{R/S} & Enantiomer & Binding & \multirow{2}{*}{BACE} & \multirow{2}{*}{Tox21} \\
    & & ranking & affinity & & \\
    \cmidrule(lr){2-6}
    & Accuracy $\uparrow$ & R. Accuracy $\uparrow$ & MAE $\downarrow$ & AUC $\uparrow$ & AUC $\uparrow$ \\
    \midrule
    $k = 1$ & 0.500±0.000 & 0.000±0.000 & 0.328±0.000 & 0.831±0.028 & 0.833±0.005 \\
    $k = 2$ & \textbf{0.989±0.001} & 0.759±0.003 & \textbf{0.267±0.001} & 0.788±0.014 & 0.836±0.004 \\
    $k = 3$ & \textbf{0.989±0.000} & \textbf{0.760±0.002} & 0.275±0.003 & \textbf{0.838±0.003} & \textbf{0.838±0.003} \\
    \bottomrule
    \end{tabularx}
\end{table*}

\subsubsection{Using ChiENN layer with existing models.} Secondly, we conducted an ablation of different design choices that can be made to enable enantiomer recognition within the existing architectures. Specifically, we focused on the GPS model and considered using three different strategies: conversion to edge graph proposed in this paper, the inclusion of chiral tags in the node features of the graph, and finally, replacement of message passing layers with ChiENN layers.

The results are presented in Table~\ref{tab:ablation-layer}. Several observations can be made: first of all, in the case of R/S task, we can see that both using chiral tags and the ChiENN layers allows us to properly recognize chiral centers (and as stated before, due to the simplicity of the task, good performance here is a necessary, but not sufficient, requirement for learning meaningful chiral representations). 

Secondly, using ChiENN layers significantly improves the performance in the enantiomer ranking (explicitly requiring chirality) and binding affinity (implicitly requiring it) tasks, more than simply including chiral tags. Interestingly, combining chiral tags with edge graph transformation improves the performance compared to using the tags alone (though not as much as using ChiENN layers), suggesting that it might be a feasible general strategy. 

Finally, the results on two remaining tasks, that is BACE and Tox21, for which the impact of chirality is unclear, are less straightforward: in the case of BACE, GPS with edge graph transformation achieves the best performance, and in the case of Tox21, using both the edge graph transformation and including the chiral tags. However, we can conclude that using ChiENN layers outperforms simply including chiral tags in tasks requiring chirality, and have comparable performance to the baseline GPS in other tasks.

\begin{table*}[t]
\caption{Ablation study of different design choices for GPS+ChiENN model. The "Graph" column indicates conversion to the edge graph; the "Tags" column indicates the inclusion of the chiral tags as node features; the "ChiENN" column indicates the usage of ChiENN as a message-passing layer.}
\scriptsize
\label{tab:ablation-layer}
    \begin{tabularx}{\textwidth}{cccYYYYY}
    \toprule
    \multirow{3}{*}{Graph} & \multirow{3}{*}{Tags} & \multirow{3}{*}{ChiENN} & \multirow{2}{*}{R/S} & Enantiomer & Binding & \multirow{2}{*}{BACE} & \multirow{2}{*}{Tox21} \\
    & & & & ranking & affinity & & \\
    \cmidrule(lr){4-8}
    & & & Accuracy $\uparrow$ & R. Accuracy $\uparrow$ & MAE $\downarrow$ & AUC $\uparrow$ & AUC $\uparrow$ \\
    \midrule
    No & No & No & 0.500±0.000 & 0.000±0.000 & 0.330±0.003 & 0.841±0.004 & 0.821±0.000 \\
    Yes & No & No & 0.500±0.000 & 0.000±0.000 & 0.306±0.001 & \textbf{0.851±0.007} & 0.821±0.007 \\
    No & Yes & No & \textbf{1.000±0.000} & 0.669±0.037 & 0.318±0.004 & 0.812±0.017 & 0.825±0.002 \\
    Yes & Yes & No & \textbf{1.000±0.000} & 0.720±0.002 & 0.283±0.008 & 0.802±0.019 & \textbf{0.838±0.003} \\
    Yes & No & Yes & 0.989±0.000 & \textbf{0.753±0.004} & \textbf{0.258±0.001} & 0.839±0.008 & 0.821±0.007 \\
    \bottomrule
    \end{tabularx}
\end{table*}


\section{Conclusions}
In this paper, we proposed and theoretically justify a general order-sensitive message-passing scheme that can be applied to any chiral-sensitive graph domain where chirality can be expressed by an order of the neighboring nodes. We used the proposed framework to construct a novel ChiENN layer that enables chirality awareness in any GNN model in the domain of molecular graphs, where chirality plays an important role as it can strongly alter the biochemical properties of molecules. Our experiments showed that the ChiENN layer allows to outperform the current state-of-the-art methods in chiral-sensitive molecular property prediction tasks.

\section*{Acknowledgements} The research of J. Tabor was supported by the Foundation for Polish Science co-financed by the European Union under the European Regional Development Fund in the POIR.04.04.00-00-14DE/18-00 project carried out within the Team-Net program. The research of P. Gaiński and M. Śmieja was supported by the National Science Centre (Poland), grant no. 2022/45/B/ST6/01117. For the purpose of Open Access, the author has applied a CC-BY public copyright license to any Author Accepted Manuscript (AAM) version arising from this submission.

\section*{Ethical Statement} As we consider our work to be fundamental research, there are no direct ethical risks or societal consequences; these have to be analyzed per concrete application.


%
%
%
\bibliographystyle{splncs04}
\bibliography{bib}

\begin{thebibliography}{10}
\providecommand{\url}[1]{\texttt{#1}}
\providecommand{\urlprefix}{URL }
\providecommand{\doi}[1]{https://doi.org/#1}

\bibitem{chiro}
Adams, K., Pattanaik, L., Coley, C.W.: Learning 3{D} representations of
  molecular chirality with invariance to bond rotations. In: The Tenth
  International Conference on Learning Representations, {ICLR} 2022, Virtual
  Event, April 25-29, 2022. OpenReview.net (2022)

\bibitem{guacamol}
Brown, N., Fiscato, M., Segler, M.H.S., Vaucher, A.C.: Guacamol: Benchmarking
  models for de novo molecular design. J. Chem. Inf. Model.  \textbf{59}(3),
  1096--1108 (2019)

\bibitem{gnn_syn2}
Chen, S., Jung, Y.: Deep retrosynthetic reaction prediction using local
  reactivity and global attention. JACS Au  \textbf{1}(10),  1612--1620 (2021)

\bibitem{geometric}
Choukroun, Y., Wolf, L.: Geometric transformer for end-to-end molecule
  properties prediction. arXiv preprint arXiv:2110.13721  (2021)

\bibitem{spherenet}
Coors, B., Condurache, A.P., Geiger, A.: Spherenet: Learning spherical
  representations for detection and classification in omnidirectional images.
  In: Proceedings of the European conference on computer vision (ECCV). pp.
  518--533 (2018)

\bibitem{geomol}
Ganea, O., Pattanaik, L., Coley, C., Barzilay, R., Jensen, K., Green, W.,
  Jaakkola, T.: Geomol: Torsional geometric generation of molecular 3d
  conformer ensembles. Advances in Neural Information Processing Systems
  \textbf{34},  13757--13769 (2021)

\bibitem{gemnet}
Gasteiger, J., Becker, F., G{\"u}nnemann, S.: Gemnet: Universal directional
  graph neural networks for molecules. Advances in Neural Information
  Processing Systems  \textbf{34},  6790--6802 (2021)

\bibitem{dimenet}
Gasteiger, J., Gro{\ss}, J., G{\"u}nnemann, S.: Directional message passing for
  molecular graphs. arXiv preprint arXiv:2003.03123  (2020)

\bibitem{first_gnn}
Gilmer, J., Schoenholz, S.S., Riley, P.F., Vinyals, O., Dahl, G.E.: Neural
  message passing for quantum chemistry. In: International conference on
  machine learning. pp. 1263--1272. PMLR (2017)

\bibitem{chiral3}
G{\'o}mez-Hortig{\"u}ela, L., Bernardo-Maestro, B.: Chiral organic
  structure-directing agents. Insights into the chemistry of organic
  structure-directing agents in the synthesis of zeolitic materials pp.
  201--244 (2018)

\bibitem{gen2}
Hawkins, P.C., Skillman, A.G., Warren, G.L., Ellingson, B.A., Stahl, M.T.:
  Conformer generation with omega: algorithm and validation using high quality
  structures from the protein databank and cambridge structural database.
  Journal of chemical information and modeling  \textbf{50}(4),  572--584
  (2010)

\bibitem{chiral2}
Jamali, F., Mehvar, R., Pasutto, F.: Enantioselective aspects of drug action
  and disposition: Therapeutic pitfalls. Journal of pharmaceutical sciences
  \textbf{78},  695--715 (09 1989)

\bibitem{tag3}
Kovatcheva, A., Golbraikh, A., Oloff, S., Feng, J., Zheng, W., Tropsha, A.:
  Qsar modeling of datasets with enantioselective compounds using chirality
  sensitive molecular descriptors. SAR and QSAR in Environmental Research
  \textbf{16}(1-2),  93--102 (2005)

\bibitem{san}
Kreuzer, D., Beaini, D., Hamilton, W., L{\'e}tourneau, V., Tossou, P.:
  Rethinking graph transformers with spectral attention. Advances in Neural
  Information Processing Systems  \textbf{34},  21618--21629 (2021)

\bibitem{wiki_chiral}
Krstulović, A.M.: Chiral Separations by HPLC. Chichester: Ellis Horwood,
  Chichester (1989)

\bibitem{chiral5}
Liao, K., Yang, Y.F., Li, Y., Sanders, J., Houk, K., Musaev, D., Davies, H.:
  Design of catalysts for site-selective and enantioselective functionalization
  of non-activated primary c–h bonds. Nature Chemistry  \textbf{10} (10 2018)

\bibitem{gnn_syn1}
Liu, C., Korablyov, M., Jastrzebski, S., Wlodarczyk{-}Pruszynski, P., Bengio,
  Y., Segler, M.H.S.: Retrognn: Fast estimation of synthesizability for virtual
  screening and de novo design by learning from slow retrosynthesis software.
  J. Chem. Inf. Model.  \textbf{62}(10),  2293--2300 (2022)

\bibitem{tag2}
Mamede, R., de~Almeida, B.S., Chen, M., Zhang, Q., Aires-de Sousa, J.: Machine
  learning classification of one-chiral-center organic molecules according to
  optical rotation. Journal of Chemical Information and Modeling
  \textbf{61}(1),  67--75 (2020)

\bibitem{gen}
Mansimov, E., Mahmood, O., Kang, S., Cho, K.: Molecular geometry prediction
  using a deep generative graph neural network. Scientific Reports
  \textbf{9}(1),  20381 (Dec 2019)

\bibitem{mat}
Maziarka, L., Danel, T., Mucha, S., Rataj, K., Tabor, J., Jastrzebski, S.:
  Molecule attention transformer. CoRR  \textbf{abs/2002.08264} (2020)

\bibitem{maziark2}
Maziarka, {\L}., Pocha, A., Kaczmarczyk, J., Rataj, K., Danel, T., Warcho{\l},
  M.: Mol-cyclegan: a generative model for molecular optimization. Journal of
  Cheminformatics  \textbf{12}(1), ~2 (Jan 2020)

\bibitem{maziarz1}
Maziarz, K., Jackson{-}Flux, H.R., Cameron, P., Sirockin, F., Schneider, N.,
  Stiefl, N., Segler, M.H.S., Brockschmidt, M.: Learning to extend molecular
  scaffolds with structural motifs. In: The Tenth International Conference on
  Learning Representations, {ICLR} 2022, Virtual Event, April 25-29, 2022.
  OpenReview.net (2022), \url{https://openreview.net/forum?id=ZTsoE8G3GG}

\bibitem{chiral1}
Nguyen, L., He, H., Pham-Huy, C.: Chiral drugs: An overview. International
  journal of biomedical science : IJBS  \textbf{2},  85--100 (06 2006)

\bibitem{tetra-dmpnn}
Pattanaik, L., Ganea, O.E., Coley, I., Jensen, K.F., Green, W.H., Coley, C.W.:
  Message passing networks for molecules with tetrahedral chirality. arXiv
  preprint arXiv:2012.00094  (2020)

\bibitem{chiral4}
Pfaltz, A., Drury, W.: Design of chiral ligands for asymmetric catalysis: From
  c2-symmetric p,p- and n,n-ligands to sterically and electronically
  nonsymmetrical p,n-ligands. Proceedings of the National Academy of Sciences
  of the United States of America  \textbf{101},  5723--6 (05 2004)

\bibitem{gps}
Ramp{\'a}{\v{s}}ek, L., Galkin, M., Dwivedi, V.P., Luu, A.T., Wolf, G., Beaini,
  D.: Recipe for a general, powerful, scalable graph transformer. Advances in
  Neural Information Processing Systems  \textbf{35},  14501--14515 (2022)

\bibitem{tag1}
Schneider, N., Lewis, R.A., Fechner, N., Ertl, P.: Chiral cliffs: investigating
  the influence of chirality on binding affinity. ChemMedChem  \textbf{13}(13),
   1315--1324 (2018)

\bibitem{schnet}
Sch{\"u}tt, K.T., Sauceda, H.E., Kindermans, P.J., Tkatchenko, A., M{\"u}ller,
  K.R.: Schnet--a deep learning architecture for molecules and materials. The
  Journal of Chemical Physics  \textbf{148}(24),  241722 (2018)

\bibitem{bace}
Subramanian, G., Ramsundar, B., Pande, V., Denny, R.A.: Computational modeling
  of $\beta$-secretase 1 ({BACE}-1) inhibitors using ligand based approaches.
  Journal of chemical information and modeling  \textbf{56}(10),  1936--1949
  (2016)

\bibitem{gat}
Veli{\v{c}}kovi{\'c}, P., Cucurull, G., Casanova, A., Romero, A., Lio, P.,
  Bengio, Y.: Graph attention networks. arXiv preprint arXiv:1710.10903  (2017)

\bibitem{molclr}
Wang, Y., Wang, J., Cao, Z., Barati~Farimani, A.: Molecular contrastive
  learning of representations via graph neural networks. Nature Machine
  Intelligence  \textbf{4}(3),  279--287 (Mar 2022)

\bibitem{moleculenet}
Wu, Z., Ramsundar, B., Feinberg, E.N., Gomes, J., Geniesse, C., Pappu, A.S.,
  Leswing, K., Pande, V.: Molecule{N}et: a benchmark for molecular machine
  learning. Chemical science  \textbf{9}(2),  513--530 (2018)

\bibitem{gen3}
Xu, M., Luo, S., Bengio, Y., Peng, J., Tang, J.: Learning neural generative
  dynamics for molecular conformation generation. In: 9th International
  Conference on Learning Representations, {ICLR} 2021, Virtual Event, Austria,
  May 3-7, 2021. OpenReview.net (2021)

\bibitem{dmpnn}
Yang, K., Swanson, K., Jin, W., Coley, C., Eiden, P., Gao, H., Guzman-Perez,
  A., Hopper, T., Kelley, B., Mathea, M., et~al.: Analyzing learned molecular
  representations for property prediction. Journal of chemical information and
  modeling  \textbf{59}(8),  3370--3388 (2019)

\bibitem{deepsets}
Zaheer, M., Kottur, S., Ravanbakhsh, S., Poczos, B., Salakhutdinov, R.R.,
  Smola, A.J.: Deep sets. Advances in neural information processing systems
  \textbf{30} (2017)

\bibitem{unimol}
Zhu, J., Xia, Y., Wu, L., Xie, S., Qin, T., Zhou, W., Li, H., Liu, T.: Unified
  2d and 3d pre-training of molecular representations. In: Zhang, A., Rangwala,
  H. (eds.) {KDD} '22: The 28th {ACM} {SIGKDD} Conference on Knowledge
  Discovery and Data Mining, Washington, DC, USA, August 14 - 18, 2022. pp.
  2626--2636. {ACM} (2022)

\end{thebibliography}

\appendix
\section{Neighbor Order Construction}
\label{app:label-order-construction}
In this section, we describe Transformation and Sorting steps of order construction from \cref{sec:chienn-order}.

\subsection{Transformation}
We perform a sequence of 3D transformations on $c_{jk}$ and $c_{i_1j}, ..., c_{i_dj} \in \mathds{R}^3 \times \mathds{R}^3$ to make $c_{jk}$ anchored to coordinate origin, perpendicular to $yz$ plane and pointed away from the observer. To simplify the notation, we will perform transformation on coordinates $c_i \in \mathds{R}^3$ that were used in the definition of the 6D coordinates $c_{ij}=c_i | c_j$ in \cref{sec:chienn-edge-graph}.

We define function to calculate angle between 2D vectors as $angle(a, b)=arccos \left(\frac{ab^T}{|a||b|}\right)$.

\begin{enumerate}
    \item We first transpose every point with $t(c) = c - c_j$, so that $c_{jk}$ is anchored to the coordinate origin, so $c_i=t(c_i)$,
    \item We calculate $\alpha_x=angle([(c_j)_y,(c_j)_z], [0, 1])$ and matrix $W_x$ representing the rotation along $x$ axis by $\alpha_x$ angle. We rotate the points, so $c_i=W_xc_i$,
    \item We calculate $\alpha_y=angle([(c_j)_x,(c_j)_z], [1, 0])$ and matrix $W_y$ representing the rotation along $y$ axis by $\alpha_x$ angle. We rotate the points, so $c_i=W_yc_i$.
\end{enumerate}


\subsection{Sorting}
After applying the transformation described in the previous versions, we can sort the 3D coordinates $c_{i_1},...,c_{i_d}$ by the angle $\alpha_{i}=angle([1, 0], [(c_i)_y, (c_i)_z])$ between the $y$-axis and their projections on $yz$ axis. Therefore, we obtain an order $(x_{i_{\pi(1)}k}, ..., x_{i_{\pi(d)}k})$ such that $\alpha_{i_{\pi(l)}} \leq \alpha_{i_{\pi(l+1)}}$.

\section{Experimental Details}
\label{apx:exp}
\subsection{Hyperparameter grids}
\label{apx:exp-grid}

We performed a grid search of parameters with identical budget: for all models, with learning rate $\in \{1\mathrm{e}{-3}, 1\mathrm{e}{-4}, 1\mathrm{e}{-5}\}$ and dropout $\in \{0, 0.2, 0.5\}$, and with model-dependent number of layers and layer dimensionality, chosen based on the parameters from corresponding papers: for DMPNN and Tetra-DMPNN, with layers $\in \{2, 4, 6\}$ and dimensionality $\in \{300, 600, 900\}$; for ChIRo, with layers $\in \{2, 3, 4\}$ and dimensionality $\in \{64, 128, 256\}$; and for GPS, SAN and ChiENN, with layers $\in \{3, 6, 10\}$ and dimensionality $\in \{64, 128, 256\}$. We restricted the grid search to a subset of a dataset of size at most 10000 molecules. As the computational costs of Tetra-DMPNN are high, for binding affinity, we took the optimal hyperparameters for DMPNN+tags as parameters for Tetra-DMPNN. We did similarly for SAN+ChiENN for R/S and binding rank and took the corresponding optimal hyperparameters from SAN.

\end{document}